  \providecommand\BibTeX{{%
    \normalfont B\kern-0.5em{\scshape i\kern-0.25em b}\kern-0.8em\TeX}}}
\definecolor{aluminum}{RGB}{153,153,153}
\definecolor{platinum}{RGB}{228,228,228}
\definecolor{bgc}{RGB}{245,245,245}
\definecolor{gallery}{RGB}{240,240,240}
\definecolor{tuatara}{RGB}{67, 67, 67}
\definecolor{flamingo}{RGB}{237, 88, 85}
\definecolor{salmon}{RGB}{242,131,107}
\definecolor{free_speech_aquamarine}{RGB}{0, 156, 114}
\definecolor{bb}{HTML}{95e1d3}
\definecolor{gg}{HTML}{c7ffd8}
\definecolor{yy}{HTML}{f0c38e}
\definecolor{rr}{HTML}{f38181}
\begin{document}

\fancyhead{}

\title{LEReg: Empower Graph Neural Networks with Local Energy Regularization}

\author{Xiaojun Ma}
\email{mxj@pku.edu.cn}
\affiliation{
\institution{Key Laboratory of Machine Perception, Ministry of Education, Peking University}
\city{Beijing}
\country{China}
}

\author{Hanyue Chen}
\email{
     ypchy@pku.edu.cn
}
\affiliation{
\institution{Yuanpei College, Peking 
University}
\city{Beijing}
\country{China}
}

\author{Guojie Song}
\authornote{Corresponding author.}
\email{
    gjsong@pku.edu.cn
}
\affiliation{
\institution{Key Laboratory of Machine Perception, Ministry of Education, Peking University}
\city{Beijing}
\country{China}
}

\begin{abstract}

Researches on analyzing graphs with Graph Neural Networks (GNNs) have been receiving more and more attention because of the great expressive power of graphs. GNNs map the adjacency matrix  and node features to node representations by message passing through edges on each convolution layer. However, the message passed through GNNs is not always beneficial for all parts in a graph. Specifically, as the data distribution is different over the graph, the receptive field (the farthest nodes that a node can obtain information from) needed to gather information is also different. Existing GNNs treat all parts of the graph uniformly, which makes it difficult to adaptively pass the most informative message for each unique part. To solve this problem, we propose two regularization terms that consider message passing locally: (1) Intra-Energy Reg and (2) Inter-Energy Reg. Through experiments and theoretical discussion, we first show that the speed of smoothing of different parts varies enormously and the topology of each part affects the way of smoothing. With Intra-Energy Reg, we strengthen the message passing within each part, which is beneficial for getting more useful information. With Inter-Energy Reg, we improve the ability of GNNs to distinguish different nodes. With the proposed two regularization terms, GNNs are able to filter the most useful information adaptively, learn more robustly and gain higher expressiveness. Moreover, the proposed LEReg can be easily applied to other GNN models with plug-and-play characteristics. Extensive experiments on several benchmarks verify that GNNs with LEReg outperform or match the state-of-the-art methods. The effectiveness and efficiency are also empirically visualized with elaborate experiments.

\end{abstract}

\begin{CCSXML}
<ccs2012>
<concept>
<concept_id>10010147.10010257.10010321.10010337</concept_id>
<concept_desc>Computing methodologies~Regularization</concept_desc>
<concept_significance>500</concept_significance>
</concept>
<concept>
<concept_id>10010147.10010257.10010293.10010294</concept_id>
<concept_desc>Computing methodologies~Neural networks</concept_desc>
<concept_significance>300</concept_significance>
</concept>
<concept>
<concept_id>10003752.10010070.10010071.10010289</concept_id>
<concept_desc>Theory of computation~Semi-supervised learning</concept_desc>
<concept_significance>100</concept_significance>
</concept>
</ccs2012>
\end{CCSXML}

\ccsdesc[500]{Computing methodologies~Regularization}
\ccsdesc[300]{Computing methodologies~Neural networks}
\ccsdesc[100]{Theory of computation~Semi-supervised learning}

\keywords{GNNs, Graph Regularization, Dirichlet Energy, Expressiveness, Node Classification}

\maketitle

\section{Introduction}
\label{sec:introduction}

Recently, there has been a surge of researches in learning the graph data with Graph Neural Networks (GNNs), such as molecules \cite{gcpn, LanczosNet}, social networks \cite{GovernCascade}, and biological interactions \cite{daggnn}.
Starting with the huge success of Graph Convolutional Networks (GCNs) on semi-supervised classification, which stands out as one of the most powerful tools and the key operation, variants of GNNs \cite{gat,sgc,gin} have been proposed to achieve more powerful representations. 

With the development of GNNs, many methods are proposed to break the ceiling of the expressiveness of GNNs by various means. 
One idea is to utilize multi-scale information with skip connections and residual layers \cite{resgcn, StrongerGCN, gcnii}. 
Meanwhile, denoising in graphs has received incremental attention. \cite{madreg} designs a regularization term based on node view and modified the graph topology to reduce the noise. PairNorm and GroupNorm \cite{pairnorm, groupnorm} apply normalization on different scales. 
\cite{sgc,daggnn} decouple the process of transformation and propagation in GNNs, which helps to prevent over-fitting and over-smoothing by simplifying the model. 

Although the above methods push GNNs towards higher stages, they either treat nodes on the whole graph in the same manner or do not jointly consider features, graph topology, and ground truth labels. 
For most graph data in the real world, the topology and feature distribution vary among different parts, which enables GNN model to capture the relation between these two inputs and node labels in node classification tasks. 
We then ask a question: \textit{Is the impact of GNNs on different parts on the graph the same?}

\begin{figure}
    \centering
    \includegraphics[width=0.47\textwidth]{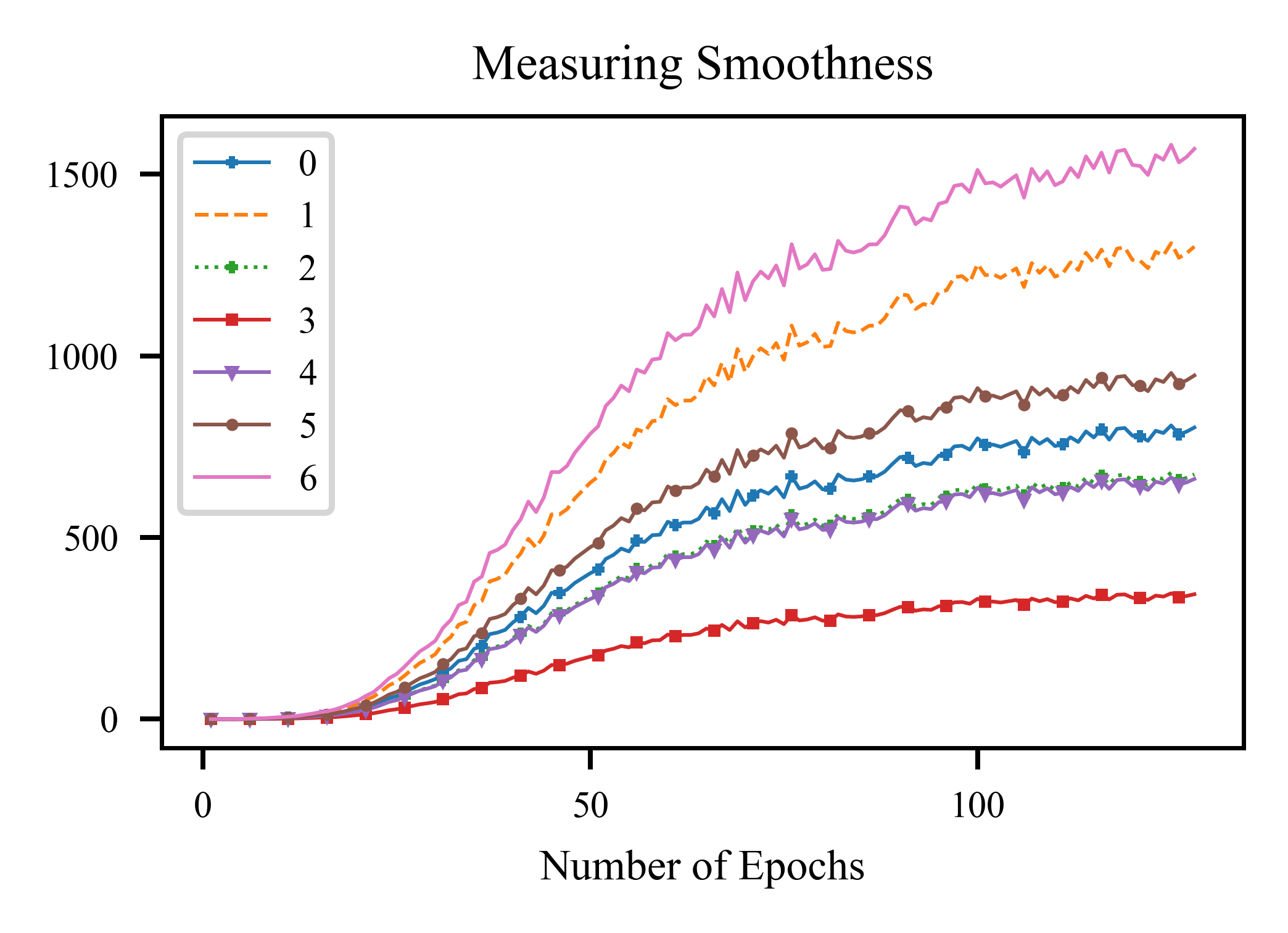}
    \caption{ 
    Evolution of smoothness over the training process on different classes. Each class is represented by a particular curve.
    }
    \label{fig:case_study}
\end{figure}

To answer this question, we conduct a simple experiment to show the evolution of smoothing degree for each class over the training process. 
A 2-layer GCN is adopted to train the citation dataset Cora \cite{citenet} for illustration. 
We separate the graph by ground truth node labels. For convenience, the smoothness of class $c_k$ is defined as $ \mathbb{E}_{i,j \in c_k} \left\| H^{(2)}_i - H^{(2)}_j  \right\|^2$ (notations refer to \autoref{sec:model}). 
As shown in \autoref{fig:case_study}, as the training goes on, the gap of smoothing degree between different sub-graphs becomes larger. The empirical results illustrate that GNN has different impacts on different sub-graphs.
This phenomenon verifies the intuitive discussion above. 

The above discussion and empirical results motivate us to conduct a specific graph convolution for each sub-graph, because the smoothing degree within a class is strongly related to the result of node classification. 
Besides, the convergence balance between classes benefits the training.    
However, training a specific GNN model for each sub-graph has the problem of high computation complexity and would lose the information between sub-graphs. 
Instead, we solve this problem with Local Energy Regularization (LEReg), which consists of Intra-Energy Reg and Inter-Energy Reg. 
For Intra-Energy Reg, we first generate a new graph by filtering the edges across classes with predicted class posterior probability. 
Minimizing the Dirichlet Energy defined on the new graph encourages the node embedding within a class to be close to each other. 
For Inter-Energy Reg, we view the graph at a high level by fusing the graph into a smaller graph, a node of which represents a class of the original graph. 
With the style of margin loss, minimizing the Inter-Energy Reg help to improve the power of GNNs to distinguish different classes.

We further provide the theoretical discussions that how LEReg boosts GNNs from three aspects in \autoref{sec:theoretical analysis}. 
First, LEReg balance the smoothness of different classes to boost GNNs. Second, with competitive efficiency, minimizing LEReg on each layer introduces extra information for GNNs. Acting as a graph structure learning mechanism, LEreg generates a cleaner graph by filtering the edges.
Last but not least, we relate LEReg to Deep GCN that minimizing the Intra-Energy Reg reaches the same output as applying infinite graph convolution on the sub-graphs, while LEReg is able to alleviate the problem of over-smoothing in deep GNNs. 

Our major contributions are summarized as follows:

\begin{enumerate}
    \item Capturing the unique information for each class is essential for GNNs. To our best knowledge, we make the first attempt to analyze how to boost GNNs with regularization in a local manner.
    \item We provide theoretical analysis to discuss how LEReg boosts GNNs with Intra-Energy Reg and Inter-Energy Reg from three aspects: (a) benefits of LEReg to balance the smoothness from the local aspect, (b) extra supervision information and joint learning of graph structure with node embeddings, (c)  connection between LEReg and deep GNNs.
    \item We conduct extensive experiments on six real-world datasets to validate the effectiveness of LEReg against the state-of-the-art methods.
\end{enumerate}

\section{Our Method: LEReg}
\label{sec:model}

In this section, we first introduce some notations related to GNNs. Then we give a detailed introduction of our proposed method LEReg, which consists of two components: Intra-Energy Reg and Inter-Energy Reg.

\subsection{Notations}

Given an attributed graph $G=(A, X)$ with vertex set $\mathcal{V}$ and  edge set $\mathcal{E}$, $A \in \mathbb{R}^{N \times N}$ denotes its adjacency matrix  and $X \in \mathbb{R}^{N \times N} $ denotes node feature matrix, where $N = |\mathcal{V}|$ is the number of nodes in the graph. $X = \left[x_1, \cdots, x_N \right]^T \in \mathbb{R}^{N \times F}$ is the node features. 
The node $v_i \in \mathcal{V}$ is associated with the $i$-th row of $X$, i.e. $x_i$ and a ground truth label $y_i$. 
Let $D$ denote the degree matrix, where $D_{i i}=\sum_{j=1}^{N} A_{i j}$ and $D_{i j}=0$ if $i \neq j$.

We introduce GNNs for node classification tasks, which classify nodes in a given graph to the right classes. We use a 2-layer GCN model $f: G \rightarrow \mathbb{R}^{N \times C}$  as an example of GNNs for convenience, where $C$ is the number of node classes. 
In details, $f$  can be formulated as $f(A,X) = \Tilde{A} \left(  \sigma \left( \Tilde{A} X W_0  \right) \right) W_1$, 
where $H$ is the size of hidden layer, $W_0 \in \mathbb{R}^{F \times H}$ and  $W_1 \in \mathbb{R}^{H \times C}$ are linear mapping matrices.
$\Tilde{A}=D^{-1} A$ is the normalized adjacency matrix
, and $\sigma$ is the activation function.
The output of $f$ is  $Z=f(A, X) \in \mathbb{R}^{N \times C}$. 
Denote $P = \{P_{ij}\}$ as the softmax of the output logits, where $P_{ij}=\frac{\exp \left(Z_{i j}\right)}{\sum_{k=1}^{C} \exp \left(Z_{i k}\right)}$ for $i=1, \ldots, N$ and $j=1, \ldots, C$.

\subsection{Intra-Energy Reg and Inter-Energy Reg}
To improve the expressive ability of GNN with local regularization rather than the global one, we need to split the graph into several sub-graphs. 
Intuitively, using the ground truth labels as the standard, the noise introduced by inter-class edges, which link nodes from different classes, can be reduced and the common signal among the same class can be well preserved. However, for semi-supervised classification with GNNs, only a few nodes with labels can be seen during training, which brings challenges for downstream tasks. Therefore, we utilize the predicted class probability to build the \textit{Intra-Energy Reg} and \textit{Inter-Energy Reg} instead. 

We can get the predicted class posterior probability $P_{i j}$ for $i=1, \cdots, N$ and $j=1, \cdots, C$ by applying softmax on the output logits. The largest predicted class posterior probability $\hat{y}_i = \arg \max_{j=1, \cdots, C} P_{i j}$ for node $v_i$ indicates that 
$v_i$ should be classified into category $\hat{y}_i$.
We get the new graph adjacency matrix $\hat{A}$ with only intra-class edges kept: 
\begin{equation}
    \label{eq:discreate mask}
    \hat{A}_{ij} = 
    \begin{cases}
    1 , & \text{if $\hat{y}_i = \hat{y}_j$ and $A_{ij}=1$ } \\
    0, & \text{otherwise} \\
    \end{cases}
\end{equation}

However, the procedure above is not differentiable, which limits the flexibility of training. We take a soft way to filter the edges. 
A simple but nontrivial way to separate the graph is to mask the adjacency matrix with a soft weight. 
Under discrete cases, we only consider the class with the largest predicted probability and ignore the rest class. 
But the predicted classes can not exactly matches the ground truth labels. 
Therefore, we use the expected value instead. Specifically, rather than assigning nodes into some classes then preserving the edges, we directly mask an edge with the probability that the corresponding two nodes are from the same class.
Given $\left\{ P_{i j} \right\}$, the probability that node $v_i$ and node $v_k$ are in the same class is
\begin{equation}
\label{eq:q probability}
    Q_{ik} = \sum_{j=1}^C  P_{ij} \cdot P_{kj} = \langle  p_i  , p_k \rangle ,
\end{equation}
where $p_i$ is the $i$-th row of $P$. The adjacency matrix after the soft mask is $\hat{A} = Q \odot A $, where $\odot$ means point-wise multiplication.

\paragraph{Intra-Energy Reg} The common pattern shared by nodes from the same class plays an important role in node classification tasks. Given a graph  $G=(A, X)$, the GNN model $f$ learns to map from node features and local topology to node labels, thereby trying to mine the connection between input and output. However, the graph is not completely clean and there exist lots of edges across classes, which brings noise into the learning of GNN. We propose a local graph Laplacian smoothing method to boost GNN by maintaining the common information pattern within the same class. Under global setting, the graph Laplacian smoothing is defined as 
\begin{equation}
    \label{eq:dirichlet_sum}
    \begin{aligned}
    E\left( H^{(L)} \right) 
    &= \frac{1}{2} \sum_{\left(v_{i}, v_{j}\right) \in \mathcal{E} } A_{i j}
    \left\|\frac{{H^{(L)}}_i}{\sqrt{d_{i}}}-\frac{{ H^{(L)} }_j}{\sqrt{d_{j}}}\right\|_{2}^{2} \\
    & = \operatorname{tr} \left( {H^{(L)} }^T L_{sym} H^{(L)} \right)
    ,
    \end{aligned}
\end{equation}
where $H^{(L)}$ is the output of the GNN model with $L$ convolution layers, $\operatorname{tr} (\cdot)$ is the trace operator and $L_{sym}:=I-D^{-\frac{1}{2}}AD^{-\frac{1}{2}}$ is the normalized Laplacian matrix. This sum in \autoref{eq:dirichlet_sum}  is also called Dirichlet Energy, which reflects the smoothness of the function $f$. 
Therefore, this regularization term is usually used in semi-supervised representation learning to provide graph structure information for a model. 
In this paper, we elaborately utilize this powerful Dirichlet Energy
\begin{equation}
    \label{eq:intra}
    \begin{aligned}
    E_{intra} \left( H^{(L)} \right) 
    & = \frac{1}{2} \sum_{k=1}^C 
    \sum_{ \left(v_{i}, v_{j}\right) \in \mathcal{E} } w_{ijk} A_{i j}
    \left\|\frac{ {H^{(L)}}_i }{\sqrt{ \hat{d}_{i}} }-\frac{{ H^{(L)} }_j}{\sqrt{\hat{d}_{j}}}\right\|_{2}^{2} \\
    & =  \frac{1}{2} \sum_{\left(v_{i}, v_{j}\right) \in \mathcal{E} } \hat{A}_{ i j }
    \left\| \frac{ {H^{(L)}}_i }{ \sqrt{\hat{d}_{i}} }-\frac{ { H^{(L)} }_j }{ \sqrt{ \hat{d}_{j} } } \right\|_{ 2 }^{ 2 } \\
    & = \operatorname{tr} \left( {H^{(L)} }^T \hat{L}_{sym} H^{(L)} \right)
    ,
    \end{aligned}
\end{equation}
where $w_{ijk} = P_{ik} \cdot P_{jk}$ is the probability that both node $v_i$ and $v_j$ belong to class $k$, and $\hat{L}_{sym}$ is the normalized Laplacian matrix of $\hat{A}$.
The Intra-Energy Reg is defined as $\mathcal{L}_{Intra} = E_{intra} \left( H^{(L)} \right) $.

\paragraph{Inter-Energy Reg} On the other hand, the distinctiveness of different classes has a powerful influence on the downstream task. The key to improving the classification results is to improve the ability of $f$ to distinguish different classes. Dirichlet Energy, as a powerful operator, reflects the smoothing of a given GNN function $f$. 
Contrary to smoothing nodes in the same class, we would like to make the nodes of different classes more distinguishable.
Therefore, we make use of Dirichlet Energy at a high level. Rather than consider edges across class directly, we regard the separated sub-graphs as nodes. All nodes in the same class are merged into a center node, ignoring the intra-class edges and preserving inter-class edges:
\begin{equation}
    \label{eq:interA}
    \bar{A} =   P^T A P  ,
\end{equation}
where $\bar{A}$ is the adjacency matrix of the merged graph. The node representations of merged graph is $\bar{H}^{(L)} =  P^T H^{(L)} $. 

Based on $\bar{A}$  and $\bar{H}^{(L)}$, we have the inter-class Dirichlet Energy:
\begin{equation}
    \label{eq:inter}
    E_{inter}  \left( H^{(L)} \right)
    = \frac{1}{2} \sum_{i,j=1}^{C}  \bar{A}_{i j}
    \left\|\frac{{ \bar{H}^{ (L) }}_i}{\sqrt{ \bar{d}_{i} } }-\frac{ { \bar{H}^{(L) } }_j}{ \sqrt{ \bar{d}_{j} } }\right\|_{2}^{2}
    ,
\end{equation}
where $\bar{d}_i$ is the degree of node $i$ in the merged graph. The Inter-Energy Reg is defined in the form of margin loss:
\begin{equation}
    \mathcal{L}_{ inter } = \max \left(0, m - E_{inter}  \right), 
\end{equation}
where $m$ is a hyper-parameter of the margin loss. The parameter $m$ acts as a boundary of inter-class Dirichlet Energy. Ideally, when no inter-class edges exist, we wish the $E_{inter}$ to be as large as possible. However, even if we filter the edges with \autoref{eq:interA}, inter-class edges still exist and maximizing  $E_{inter}$ without any control would make nodes being classified mistakenly by the function $f$.

\paragraph{Layer-wise LEReg} 
Unlike other regularization terms, which only perform on the last layer as they perform on the predicted labels directly, our LEReg can be further extended to perform on each layer of GNN. An L-layer GNN model can be formulated as
\begin{equation}
    \label{eq:l layer gcn}
    H^{(l+1)}=\sigma \left( \Tilde{A} H^{(l)} W^{(l)}\right).
\end{equation}
By using LEReg on each layer as:
\begin{equation}
\begin{aligned}
& \mathcal{L}_{intra,l} = E_{intra} \left( H^{(l)} \right) ,\\
& \mathcal{L}_{inter,l} = \max \left(0, m - E_{inter,l} \left( H^{(l)} \right)   \right),
\end{aligned}
\end{equation}
where $H^{(l)}$ denotes hidden embeddings of the $l$-th layer in $f$. 
Given the set of training nodes $S_{train}$, the GNN model $f$ can be trained with the combined loss function
\begin{equation}
\label{eq:combined loss}
\begin{aligned}
    \mathcal{L} 
    = &\mathcal{L}_{sup} 
    + \sum_{l=1}^L \alpha_l \mathcal{L}_{intra,l} 
    + \sum_{l=1}^L \beta_l \mathcal{L}_{inter,l} \\
    = & - \frac{1}{ | S_{train} | }  
    \sum_{i \in S_{train}} 
    \sum_{j=1, \cdots, C} Y_{ij} \log \left( P_{ij} \right) \\
    & + \sum_{l=1}^L \alpha_l \mathcal{L}_{intra,l} 
    + \sum_{l=1}^L \beta_l \mathcal{L}_{inter,l}
\end{aligned}
\end{equation}
where $Y_{ij}=1$ if the ground truth label of node $v_i$ is $j$ and $Y_{ij}=0$ otherwise.
$\alpha_l$ and $\beta_l$ are the regularization factors for $\mathcal{L}_{intra,l}$ and $\mathcal{L}_{intra,l}$, respectively.  
To apply LEReg only on the final layer of $f$, we could 
sett $\alpha_l = 0$ and $\beta_l = 0$ for $l=1, \cdots, L-1$ in \autoref{eq:combined loss}.
By introducing LEReg into previous layers, we can preserve useful information from low levels to high levels. Specifically, on the one hand, layer-wise Intra-Energy Reg helps to preserve common patterns in node features and node embeddings of hidden units. These common patterns contain information of different levels of granularity.
On the other hand, the layer-wise Inter-Energy Reg empowers the GNN to distinguish the labels of different nodes in a hierarchical manner. 

\paragraph{Complexity Analysis} For a GNN model $f$ with $L$ layers and the size of the $i$-th hidden layer $F_l$ for $l=1, \cdots, L$ ($F_L = C$), the computational complexity of LEReg on the final layer is  $\mathcal{O} \left(|\mathcal{E}| C \right)$, where $|\mathcal{E}|$ is the number of edges in the graph.
Computing $\mathcal{L}_{ inter, L}$ in \autoref{eq:intra} costs $\mathcal{O} \left(|\mathcal{E}| C \right)$ for an efficient GPU-based implementation using sparse-dense matrix multiplications. 
Computing the probability $Q$ ( defined in \autoref{eq:q probability} ) originally costs $\mathcal{O} \left(|\mathcal{V}|^2 C \right)$. However, it could be reduced to $\mathcal{O} \left(|\mathcal{E}| C \right)$ as we only need $Q_{ij}$ for $(v_i, v_j) \in \mathcal{E}$. 
Computing $\bar{A}$ costs $\mathcal{O} \left(|\mathcal{E}| C \right)$, which is similar to computing \autoref{eq:intra}. For $E_{inter,L}$ in \autoref{eq:inter}, it costs $\mathcal{O} \left( C^3 \right)$. Therefore, the overall computational complexity is $\mathcal{O} \left( 3 |\mathcal{E}| C + C^3 \right)$ = $\mathcal{O} \left(  |\mathcal{E}| C \right)$, as $C \ll |\mathcal{E}|$. 
The overall computational complexity for all layers is $\mathcal{O} \left(  |\mathcal{E}| \left( \sum_{l=1}^L F_l \right)  \right)$.
\section{How LEReg boosts GNNs?}
\label{sec:theoretical analysis}

\subsection{Benefits of LEReg from local regularization}

\label{subsec: discusion Benefits of LEReg from local regularization}

Given a graph  $G=(A, X)$ and a set of nodes with ground truth labels $V_{train}$, the GNNs model $f$  learns to map from the node features and the local topology to the node labels. 
Usually, for the classification task, the more balanced the data for training is, the better the result of the classification task can get. 
Unfortunately, it is challenging to balance the data distribution for graph-structured data with methods like resampling, because not only node features but also graph structure play an important role in graph representation learning. Learning structure distribution is rather difficult, which is still under exploration.

The graph Laplacian regularization term is usually used in semi-supervised representation learning to provide graph structure information for a model $g(X)$. Recently, with the development of GNNs, more and more methods have been proposed to encode the structure information $A$ directly. 
However, different structure distributions exist in a graph, and they help the GNNs model $f$  to distinguish different nodes. 
Using a global graph Laplacian regularization term or encoding $A$ with a uniform GNNs model $f(A,X)$ may constrain the model to further capture the specific information for each class. 

\paragraph{Theoretical analysis for distributions of different sub-graphs}

As we show in \autoref{fig:case_study}, there is large difference among the speed of node embedding smoothing for different classes. Here we give a more theoretical discussion. The convergence rate of node embeddings reflects the speed of smoothing over the graph. \cite{defsubspace} analyses the connection between the structure and the smoothing degree after one layer of Graph Convolution (GC). 

\begin{theorem}
\label{theorem:oono convergence}
For any initial value $H^{(0)}$ for node features, the output of l-th layer $H^{(l)}$ satisfies $d_{\mathcal{M}}\left(H^{(l)}\right) \leq$ $(s \lambda)^{l} d_{\mathcal{M}} \left(H^{(0)}\right) .$ 
In particular, $d_{\mathcal{M}}\left(H^{(l)}\right)$ exponentially converges to 0 when $s \lambda<1 .$
\end{theorem}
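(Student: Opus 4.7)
The plan is to adapt the argument of Oono and Suzuki from which the statement is quoted: show that a single graph convolution contracts the distance to the invariant subspace $\mathcal{M}$ by a factor of at most $s\lambda$, then iterate. I will not re-derive what $\mathcal{M}$, $s$, and $\lambda$ are in detail, but instead take them as in the cited work: $\mathcal{M}$ is the subspace of $\mathbb{R}^{N\times F}$ spanned by the eigenvectors of $\tilde{A}$ (or $I-L_{sym}$) associated with eigenvalue $1$ (lifted to the feature dimension), $\lambda$ is the second largest singular value of the propagation operator, and $s$ is the bound on the spectral norm of the layer weight matrix $W^{(l)}$. The quantity $d_{\mathcal{M}}(H):=\inf_{M\in\mathcal{M}}\|H-M\|_{F}$ is then the Frobenius distance to this subspace.

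First I would establish the one-step contraction for the linear part. Write $H=H_{\parallel}+H_{\perp}$ with $H_{\parallel}\in\mathcal{M}$ and $H_{\perp}\perp\mathcal{M}$. Because $\mathcal{M}$ is the top eigenspace of $\tilde{A}$, the operator $\tilde{A}$ preserves $\mathcal{M}$ and has operator norm at most $\lambda$ on $\mathcal{M}^{\perp}$; consequently $d_{\mathcal{M}}(\tilde{A}H)\le \lambda\, d_{\mathcal{M}}(H)$. Next, right-multiplication by $W^{(l)}$ acts on the feature dimension, which is orthogonal to the way $\mathcal{M}$ sits in node space, so it keeps $\mathcal{M}$ invariant and expands the orthogonal component by at most $s=\|W^{(l)}\|_{\mathrm{op}}$, giving $d_{\mathcal{M}}(\tilde{A}HW^{(l)})\le s\lambda\, d_{\mathcal{M}}(H)$.

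The main obstacle is the non-linearity $\sigma$. For ReLU (or any component-wise $1$-Lipschitz map that fixes $0$ and whose action preserves $\mathcal{M}$), I would argue that $\sigma$ is non-expansive with respect to $d_{\mathcal{M}}$: for any $M\in\mathcal{M}$ chosen so that $\sigma$ leaves $M$ inside $\mathcal{M}$, component-wise $1$-Lipschitzness yields $\|\sigma(Y)-\sigma(M)\|_F\le\|Y-M\|_F$, and hence $d_{\mathcal{M}}(\sigma(Y))\le d_{\mathcal{M}}(Y)$. Establishing that $\mathcal{M}$ is closed under ReLU (so that a minimizer $M$ can actually be chosen inside $\mathcal{M}$ post-activation) is the delicate ingredient borrowed from the cited paper; I would invoke their lemma rather than re-prove it.

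Combining the three estimates gives the one-layer bound
\begin{equation*}
d_{\mathcal{M}}(H^{(l+1)})=d_{\mathcal{M}}\bigl(\sigma(\tilde{A}H^{(l)}W^{(l)})\bigr)\le s\lambda\, d_{\mathcal{M}}(H^{(l)}).
\end{equation*}
A straightforward induction on $l$, starting from $H^{(0)}$, yields $d_{\mathcal{M}}(H^{(l)})\le(s\lambda)^{l}d_{\mathcal{M}}(H^{(0)})$, and when $s\lambda<1$ the right-hand side tends to $0$ geometrically, proving the exponential convergence claim.
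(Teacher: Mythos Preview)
The paper does not prove this theorem at all: it is quoted verbatim from Oono and Suzuki (the reference \texttt{defsubspace}) and is used as a black box to motivate the subsequent discussion of subgraph eigenvalue gaps. So there is no ``paper's own proof'' to compare against.

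Your sketch is a faithful outline of the original Oono--Suzuki argument: split $H$ into components parallel and orthogonal to $\mathcal{M}$, use that the propagation operator contracts $\mathcal{M}^{\perp}$ by $\lambda$, that right-multiplication by $W^{(l)}$ expands by at most $s$, and that ReLU is non-expansive with respect to $d_{\mathcal{M}}$ because $\mathcal{M}$ (being generated by nonnegative vectors in the connected-component case) is closed under ReLU. The one point you flag as delicate---that $\mathcal{M}$ is ReLU-invariant---is indeed the crux, and you are right to defer to the cited lemma rather than reprove it. Nothing in your outline is wrong; it is simply more than the present paper itself provides.
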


Here, $s_l$ is the singular value of $W^{(l)}$ in \autoref{eq:l layer gcn}, 
$\mathcal{M}:=U \otimes \mathbb{R}^{C}=\left\{\sum_{m=1}^{M} e_{m} \otimes w_{m} \mid w_{m} \in \mathbb{R}^{C}\right\}$ and $\left(e_{m}\right)_{m \in[M]}$ is the orthonormal basis of $U$. 
We denote $U$ by the eigenspace associated with the largest eigenvalue $\lambda_{N}$ of $D^{-\frac{1}{2}} A D^{-\frac{1}{2}}$.
The distance between $H$ and $\mathcal{M}$ is denoted as $d_{ \mathcal{M} } ( H ) := \inf \left\{ \| H-Y \|_{\mathrm{F}} \mid Y \in \mathcal{M}  \right\}$.
Then, we have $d_{\mathcal{M}}\left(H^{(l+1)}\right) \leq s_{l} \lambda d_{\mathcal{M}}\left(H^{(l)}\right) .$ 
Following the above theorem on the convergence rate on $H^{(l)}$, we discuss the relationship between the convergence rate of sub-graphs and the whole graph, as shown in the next theorem. 
The structure of sub-graph $S$ is unknown, which makes it hard to accurately analyze its eigenvalues, and the eigenvalues of sub-graphs and the whole graph do not have a fixed numeric relation. So we loosen the problem to discuss the eigenvalue gap: 
\begin{theorem}
For $\lambda$, the second largest eigenvalue of $D^{ - \frac{1}{2} } A D^{-\frac{1}{2}} $, we denote its upper bound as $\lambda_u$, and its lower bound as $\lambda_l$. $G$ denotes a graph and $S$ is a sub-graph of $G$. $\lambda_S$ is the second largest eigenvalue of normalized Laplacian matrix of sub-graph $S$. Then we have:
\begin{itemize}
    \item  $\lambda_{uS}\leq \lambda_{uG}$ and $\lambda_{lS}\leq \lambda_{lG}$
    \item For two non-isomorphic sub-graphs of $G$, $S_1$ and $S_2$ ,  $\lambda_{S_1} \not = \lambda_{S_2}$.
\end{itemize}
\end{theorem}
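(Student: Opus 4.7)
The plan is to handle the two bullets by different techniques: a variational / interlacing argument for the first and a spectral-structural argument for the second.

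For the first bullet, I would rely on the Courant--Fischer characterization
$\lambda_2(M) = \max_{v\perp v_1,\,\|v\|=1} v^{\top} M v$ applied to $M_G := D_G^{-1/2} A_G D_G^{-1/2}$ and the analogous $M_S$. The key move is the standard ``lift by zero'' construction: given an eigenvector $v_S$ of $M_S$ supported on the vertex set $V(S)$, I extend it to $\tilde v_S\in\mathbb{R}^{|V(G)|}$ by padding with zeros and compare $\tilde v_S^{\top} M_G \tilde v_S$ with $v_S^{\top} M_S v_S$. The subtle bookkeeping step (and the main technical irritation) is that $D_G(i)\ge D_S(i)$ for $i\in V(S)$, so one cannot literally appeal to Cauchy interlacing for principal submatrices; the degrees in the two normalizations differ, and the Rayleigh-quotient comparison needs a per-vertex rescaling factor $\sqrt{D_S(i)/D_G(i)}\le 1$. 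Once that factor is absorbed, the Rayleigh quotients can be related, and the min--max then propagates the inequality to the lower bound $\lambda_l$ and, by a dual argument on $I-M$ combined with the analytic forms of classical bounds (Cheeger's $1-h^2/2\le\lambda_2\le 1-h/2$ or the diameter bound $\lambda_2 \le 1 - 1/(n\,\mathrm{diam})$), to $\lambda_u$. Concretely I would check that each standard bound one uses to define $\lambda_u,\lambda_l$ is monotone under the subgraph relation, so that $\lambda_{uS}\le\lambda_{uG}$ and $\lambda_{lS}\le\lambda_{lG}$ follow.

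For the second bullet I would argue by contradiction: suppose two non-isomorphic subgraphs $S_1, S_2$ of $G$ satisfy $\lambda_{S_1}=\lambda_{S_2}$. From Perron--Frobenius, the first eigenvalue of the normalized adjacency is $1$ on each connected component, so it contains no isomorphism information, and I would focus on the second eigenvector. Using that the entries of the Fiedler-type eigenvector induce a sign partition on the vertices of each $S_k$, I would compare the induced partitions and relate them back to $A_{S_k}$ via $\hat L_{sym} = I - D^{-1/2}AD^{-1/2}$; equal second eigenvalues force a structural alignment of the two partitions inside $G$, and I would try to promote that alignment to a full graph isomorphism, contradicting the hypothesis.

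The first bullet is mostly routine once the degree-renormalization factor is made explicit. The main obstacle is the second bullet: the naked statement ``non-isomorphic implies distinct $\lambda$'' is false for graphs in general because cospectral non-isomorphic graphs exist, so the argument only goes through under additional structural hypotheses on $G$ (for instance, that $G$ is sparse enough, or that the sub-graphs in question are those induced by a label partition and are ``generic'' in the sense of having simple spectrum). I would therefore expect to spend most of the effort identifying the minimal extra assumption on $G$ under which the Perron--Frobenius/Fiedler argument above actually closes, and stating the theorem under that assumption rather than in full generality.
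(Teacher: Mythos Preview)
Your route for the first bullet diverges substantially from the paper's. The paper does not touch interlacing or Rayleigh quotients at all: it simply \emph{defines} $\lambda_u$ and $\lambda_l$ to be the Cheeger bounds, i.e.\ $\lambda_u = 1 - \phi(G)^2/2$ and $\lambda_l = 1 - 2\phi(G)$, obtained from the Cheeger inequality $2\phi(G) \ge \lambda^* \ge \phi(G)^2/2$ applied to the spectral gap $\lambda^* = 1-\lambda$. Since the graph conductance is defined as a minimum over cuts, $\phi(G) = \min_S \phi(S)$, the inequality $\phi(S) \ge \phi(G)$ is immediate, and both bounds are monotone decreasing in $\phi$, which yields $\lambda_{uS} \le \lambda_{uG}$ and $\lambda_{lS} \le \lambda_{lG}$ in one line. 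Your variational argument with zero-padding and degree renormalization aims at a stronger statement (comparing the actual second eigenvalue of $S$ as a standalone graph against that of $G$), which the paper never claims; it only compares the \emph{bounds}, and those bounds are explicit functions of $\phi$. So your machinery is heavier than needed and also targets a slightly different object: the paper silently identifies the cut conductance $\phi(S)$ inside $G$ with the quantity governing the subgraph's own Cheeger bounds, whereas you are trying to relate the intrinsic spectra.

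For the second bullet your instinct is correct, and the paper does not do better. Its entire argument is a single sentence to the effect that non-isomorphic subsets ``may have different conductance, thus different bounds of $\lambda$,'' which neither rules out cospectral subgraphs nor converts different bounds into different eigenvalues. You should not expect to find a clean closure here; the paper itself treats this bullet heuristically and immediately concedes that the analysis ``may not lead to a tight bound'' but ``could still provide some insights.'' Your plan to look for an extra structural hypothesis is therefore more honest than the paper's own treatment, but unnecessary if the goal is only to match the paper.
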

\begin{proof}

We adopt the relation between the eigenvalues gap and the conductance of graph to give the proof.
The conductance of $G$ is defined as $\phi(G)= \min_S \phi(S)$, where
\begin{gather*}
    \phi(S)=\frac{\sum_{s\in S,j \in \bar{S}} A_{ij}}{min(d(S),d(\bar{S}))}, \
    \bar{S} = G \backslash S, \ 
    d(S)=\sum_{i\in S}d_i
\end{gather*}


The Cheeger inequality \cite{cheeger} relates the eigenvalue gap and conductance. It states that for a connected graph $G$, we have $ 2 \phi(G) \geq \lambda^* \geq \frac{\phi^2(G)}{2}$, where $\lambda^*$ is the smallest non-zero eigenvalue of $L_{sym}$. Then we have $\lambda =1-\lambda^*$, $\lambda_l = 1-2\phi(G)$ and $\lambda_u = 1-\frac{\phi(G)^2}{2}$. 

For any $S$, $\phi(S)\geq \phi(G)$, which leads to $\lambda_{uS}\leq \lambda_{uG}$ and $\lambda_{lS}\leq \lambda_{lG}$. 
Similar to the above proof, different subsets $S_1$ and $S_2$ may have different conductance, thus different bounds of $\lambda$.
\end{proof}

This may not lead to a tight bound. However, it could still provide some insights. For a sub-graph $S$, more edges across $S$ and $\bar{S}$ makes larger $\phi(S)$. Larger $\phi(S)$ implies larger $\phi(G)$ to some extend, which leads to smaller $\lambda$ of $G$, as the smallest non-zero eigenvalue of $L_{sym}$ is strongly related to the structure of $G$. Given Theorem \ref{theorem:oono convergence}, if the sub-graphs in $G$ share lots of edges with the other parts, the $H^{(l)}$ would converge fast and if they share few edges, the $H^{(l)}$ would converge slowly. 


Based on the analysis of eigenvalues above, we can conclude the local smoothing phenomenon as:

\begin{lemma}[Convergence rate of local graphs]
(1) Local embedding spaces converge faster than or equal to the global graph. 
(2) If a sub-graph shares more edges with the rest of the graph, it would converge faster
\end{lemma}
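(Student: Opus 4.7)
The plan is to derive both claims as direct consequences of \autoref{theorem:oono convergence} combined with the eigenvalue bounds for sub-graph spectra established in the preceding theorem. The central observation is that \autoref{theorem:oono convergence} expresses the convergence of $H^{(l)}$ toward the invariant subspace $\mathcal{M}$ as a geometric decay governed by the factor $s\lambda$, so any upper bound on $\lambda$ translates directly into a bound on smoothing speed: a smaller $\lambda$ yields faster convergence to $\mathcal{M}$.

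For claim (1), I would apply \autoref{theorem:oono convergence} to the sub-graph $S$ (viewed as its own graph) and to the full graph $G$, obtaining contraction factors $(s\lambda_S)^l$ and $(s\lambda_G)^l$ respectively. The preceding theorem gives $\lambda_{uS} \leq \lambda_{uG}$, i.e., the worst-case second eigenvalue of $S$ is no larger than that of $G$. Substituting this bound into the contraction estimate yields
\begin{equation*}
d_{\mathcal{M}}(H_S^{(l)}) \leq (s\lambda_{uS})^l \, d_{\mathcal{M}}(H_S^{(0)}) \leq (s\lambda_{uG})^l \, d_{\mathcal{M}}(H_S^{(0)}),
\end{equation*}
so the local embedding smooths at least as fast as the global one, with equality in the degenerate case $S=G$ or when the sub-graph happens to share the same spectral gap.

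For claim (2), I would trace the effect of additional cross-edges through the Cheeger inequality reused in the preceding proof. If the sub-graph $S$ shares more edges with $\bar{S}$, the numerator $\sum_{i\in S,\, j\in \bar{S}} A_{ij}$ of $\phi(S)$ grows, so $\phi(S)$ increases whenever the denominator $\min(d(S), d(\bar{S}))$ does not grow as fast (the typical balanced regime). A larger $\phi(S)$ tightens the Cheeger lower bound $\lambda^* \geq \phi^2/2$ on the spectral gap of the associated normalized Laplacian, hence shrinks $\lambda = 1-\lambda^*$. Feeding this smaller $\lambda$ back into \autoref{theorem:oono convergence} produces a smaller geometric contraction factor and therefore faster convergence, establishing (2).

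The main obstacle is being careful about \emph{which} graph's spectrum is actually being controlled. The conductance argument in the preceding theorem bounds $\phi(G) = \min_{S'} \phi(S')$, not the spectrum of $S$ in isolation, so to connect more external edges of $S$ to faster convergence on $S$ I would either (i) restrict attention to cuts that attain the minimum, so that varying their edge count directly moves $\phi(G)$, or (ii) argue that the soft-masked Laplacian $\hat{L}_{sym}$ of \autoref{sec:model} inherits the same monotonicity in $\phi(S)$. Because the preceding theorem already warns that its bound is loose, I would phrase claim (2) as a qualitative monotone statement rather than a tight quantitative inequality, and flag this caveat explicitly in the final write-up.
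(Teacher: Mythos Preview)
Your proposal is correct and follows essentially the same route as the paper: the lemma is presented there as an immediate consequence of the preceding eigenvalue theorem (via the Cheeger bound $\lambda_{uS}\le\lambda_{uG}$) fed into \autoref{theorem:oono convergence}, with claim~(2) argued informally from ``more cross-edges $\Rightarrow$ larger $\phi(S)$ $\Rightarrow$ larger $\phi(G)$ $\Rightarrow$ smaller $\lambda$''. Your write-up is in fact more careful than the paper's, which offers no separate formal proof and does not flag the ambiguity you identify about which graph's spectrum the conductance argument actually controls.
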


Therefore, using global regularization terms may hinder the ability of GNNs to capture distinct patterns of different class nodes. LEReg separates the graph into sub-graphs with Intra-Energy Reg and reduces the conductance with Inter-Energy Reg implicitly.

\subsection{Learning graph structure and spreading labels}
\label{subsec: discusion Extra information from the supervision of LEReg}

\paragraph{Graph structure affects the learning of GNNs.} The success of the existing GNNs relies on one fundamental assumption, i.e., the original graph structure is reliable. However, this assumption is usually unrealistic, since the graph in reality is inevitably noisy or incomplete. 
The more noise a graph contains, the worse node representations GNN models learn. 
Graph regularization, as an unsupervised term applied on a GNNs model’s representations during training, 
can provide extra supervision information for the model to learn a better graph structure in a graph.
For example, Wen et al. \cite{GSLforRobust} proposed to learn the graph structure by preserving low-rank, sparsity and the feature smoothing using three regularization terms.

\paragraph{Limitations of global regularization term}
Regularization terms, as an efficient method to boost Neural Networks, have been well-studied. For graph-structured data, the most common graph Laplacian regularization $\mathcal{L}_{\text {laplacian }}=\sum_{(i, j) \in \mathcal{E}}\left\| H_{i}^{ (L) }-Z_{j}^{ (L) }\right\|_{2}^{2}$ is used to capture the structure information. However, the assumption it bases on is that the graph structure is reliable. The improvement of applying global graph Laplacian regularization is limited as the structure information it captures is usually incomplete and noisy. 
Besides, the original graph convolution operator $g_{\theta}$ is approximated by the truncated Chebyshev polynomials as $g_{\theta} * Z=\sum_{k=0}^{K} \theta_{k} T_{k}(\tilde{\Delta}) Z$ \cite{hammond2011wavelets}, where $K$ means to keep $K$-th order of the Chebyshev polynomials, and $\theta_{k}$ is the parameters of $g_{\theta}$ as well as the Chebyshev coefficients. 
$T_{k}(x)=2 x T_{k-1}(x)-T_{k-2}(x)$ is the Chebyshev polynomials, with $T_{0}(x)=1$ and $T_{1}(x)=x$.  $K$ is equal to 1 for today's spatial GNNs such as GCN \cite{kipf-gcn}, GAT \cite{gat} and so on. Thus, the global graph Laplacian regularization does not learn information more than the GNNs.

\paragraph{Joint learning of graph structure and node embeddings with the supervision of LEReg}

Similar to the other global graph regularization terms, LEReg could provide extra supervision information for the nodes. The advantage of our method is that we combine the learning of node embeddings and graph structure. Graph Laplacian regularization  \cite{zhou2004learning,ando2007learning}  helps to learn the node embeddings given the original graph. Han et al. \cite{Preg-yang2020rethinking} propose 
$\mathcal{L}_{P-r e g}=\frac{1}{N} \phi(Z, \tilde{A} Z)$
to improve the node embeddings by one more aggregation of the final representations, where $\phi$ is a function that measures the difference between $Z$ and $\tilde{A} Z$, which is also based on the original graph topology structure. 
In our method, the two regularization terms are performed on the learnt graph structure. We perform Intra-Energy Reg with $\hat{A} = Q \odot A $ and $Q$ is defined in \autoref{eq:q probability}, while Inter-Energy Reg with $\bar{A} = P^T A P $.  
$\hat{A}$ and $ \bar{A}$ serve as the new graph structure for learning node embeddings. 
By reconstructing the graph, the graph structure is cleaner and the node embeddings learned are more robust.

On the other hand, we separate the graph using the predicted class posterior probability $P$. We bridge the gap between labeled nodes and unlabeled nodes, making the label information spread on the graph for intra-classes.  
This is similar to the label propagation that spread label information through the edges of a graph. 
Furthermore, Inter-Energy Reg is class-centric, making each class distinguishable by the GNNs. Minimizing the Inter-Energy Reg equals to doing node clustering with the supervision of labels.

In conclusion, with LEReg we learn the graph structure and node embedding simultaneously. Introducing additional class information to GNNs, LEReg further provides a new class-centric view to boost the results of node representation learning.


\subsection{Connection to Deep GNNs}

Recent studies begin to explore GNNs with more layers, wishing to capture information from further nodes. Ming and Chen et al. \cite{gcnii} prove that for a self-looped  graph  $\tilde{G}$ and a graph signal $\mathrm{x}$. 
A $K$-layer GCN with residual connection and identity mapping can express a $\mathrm{K}$ order polynomial filter  $\left( \sum_{ \ell=0 }^{K} \theta_{\ell} \tilde{\mathbf{L}}^{\ell}\right) \mathbf{x}$ with arbitrary coefficients $\theta$. Similar methods \cite{resgcn, jknet, StrongerGCN} adopt the same idea to combine residual connection and identity mapping with Graph Convolution. 

\paragraph{The limitation of deep GNNs}
Deeper is not always better. Deep GNNs suffer from the problem of over-smoothing \cite{defsubspace, deeper_insight} and over-fitting \cite{grand,dropedge}. For the over-smoothing issue, recalling Theorem \ref{theorem:oono convergence}, GNNs do not improve (or sometimes worsen) their predictive performance as we pile up many layers and add non-linearity. 
Its output exponentially approaches the set of signals that carry information of the connected components of the graph and node degrees only for distinguishing nodes. 
On the other hand, over-fitting, as well as gradient vanishing problems generally appear for all deep neural networks. 
Even with residual connections, the training becomes harder with deeper GNNs. 
The truth is, we do not need such deep layers. For a connected graph $G$, the diameter of which is $d_G$ (the length of the longest path between two nodes), a GNN model with $d_G$ is enough to view all the other nodes in the graph.
 
\paragraph{LEReg adaptively capture the important information with Intra-Energy Reg and Inter-Energy Reg} 
We first prove that minimizing the Intra-Energy Reg for the final outputs reaches the same output as applying infinite graph convolution on the sub-graphs when we get $\hat{A}$ with \autoref{eq:discreate mask}. 

\begin{lemma}
\label{lemma:gradient}
The aggregation step in GCN is equivalent to running gradient descent for $\mathcal{L}_{ \text{intra} }$ with a step size of one.
\end{lemma}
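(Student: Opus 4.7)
The plan is to directly compute the gradient of $\mathcal{L}_{intra}$ with respect to the hidden representations $H$ and identify the resulting gradient descent update with the normalized aggregation performed by a GCN layer on the masked graph $\hat{A}$. I would first rewrite the intra-energy in its compact quadratic form, using equation (5) to obtain $\mathcal{L}_{intra}(H) = \tfrac{1}{2}\operatorname{tr}(H^T \hat{L}_{sym} H)$, keeping the $\tfrac{1}{2}$ explicit so that the step size will come out to exactly one.

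Next, I would compute $\nabla_H \mathcal{L}_{intra}$ using the standard matrix-calculus identity $\nabla_H \operatorname{tr}(H^T M H) = (M+M^T)H$. Because $\hat{L}_{sym}$ is symmetric, this gives $\nabla_H \mathcal{L}_{intra} = \hat{L}_{sym} H$. Performing one gradient-descent step with step size $\eta = 1$ then yields
\[
H - \hat{L}_{sym} H \;=\; (I - \hat{L}_{sym})H \;=\; \hat{D}^{-1/2}\hat{A}\hat{D}^{-1/2} H,
\]
which is precisely a GCN aggregation step over the cleaned graph $\hat{A} = Q \odot A$. Identifying this expression with the layer-wise update in equation (9) (restricted to its aggregation part, i.e., before multiplying by $W^{(l)}$ and applying $\sigma$) gives the claim.

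The main obstacle will be reconciling two normalization conventions used in the paper: $\tilde{A} = D^{-1}A$ appears in the definition of $f$, whereas $\mathcal{L}_{intra}$ is built from the symmetric normalization $\hat{L}_{sym} = I - \hat{D}^{-1/2}\hat{A}\hat{D}^{-1/2}$. I would handle this by noting that the lemma is most naturally stated for the symmetric GCN variant, and by pointing out that for the random-walk form the analogous identity holds if one replaces $\mathcal{L}_{intra}$ with the edge-weighted Dirichlet energy whose gradient coincides with $(I - D^{-1}\hat{A})H$; both are equivalent up to a diagonal rescaling that does not change the propagation pattern.

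A secondary subtlety is that $\hat{A}$ depends on $H$ through the soft mask $Q$ built from the softmax probabilities $P$, so strictly speaking the functional $\mathcal{L}_{intra}$ is not purely quadratic in $H$. I would address this by treating $\hat{A}$ as fixed at the current iterate when taking the inner gradient step (the usual \emph{stop-gradient}/detach convention for proximal interpretations of GNN layers). Under this convention the calculation above is exact, and the lemma formalizes the intuition that each GCN layer performs one implicit minimization step of the localized Dirichlet energy, which in turn supports the subsequent claim that iterating this procedure drives the representation toward the minimizer of $\mathcal{L}_{intra}$, i.e., the same fixed point one would obtain by applying infinitely many graph convolutions on each sub-graph.
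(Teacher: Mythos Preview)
Your argument is correct and follows the same underlying idea as the paper: compute the gradient of $\mathcal{L}_{intra}$ with respect to $H$ and observe that a single step of gradient descent with unit step size reproduces a GCN aggregation on $\hat{A}$. The paper, however, carries this out in node-wise (edge-sum) form rather than in your matrix/trace form: it writes $\partial\mathcal{L}_{intra}/\partial H_i=\sum_{(v_i,v_j)\in\mathcal{E}}\hat{A}_{ij}(H_i-H_j)$ and then $H_i-\partial\mathcal{L}_{intra}/\partial H_i=\sum_j\hat{A}_{ij}H_j$, which is the (row-normalized) aggregation. In doing so the paper silently drops the $\sqrt{\hat d_i}$ factors from equation~(5) and implicitly assumes $\sum_j\hat{A}_{ij}=1$; it does not discuss the symmetric-vs.-random-walk normalization or the stop-gradient issue you raise. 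Your matrix-calculus derivation via $\nabla_H\tfrac12\operatorname{tr}(H^T\hat L_{sym}H)=\hat L_{sym}H$ is cleaner and makes the normalization explicit, at the small cost of needing to reconcile the extra $\tfrac12$ with the paper's equation~(5) (where the trace form already absorbs the $\tfrac12$, so the gradient would be $2\hat L_{sym}H$ unless you adjust the constant or the step size). Both routes yield the same identification of one gradient step with one aggregation step; yours simply handles the bookkeeping the paper leaves implicit.
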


\begin{proof}
\begin{equation}
\begin{aligned}
\frac{  \partial \mathcal{L}_{intra,L}  } {\partial H^{(L)}_i } 
&= \sum_{ \left(v_{i}, v_{j}\right) \in \mathcal{E} } 
\hat{A}_{i j} \left( H_{i}^{(L)} -  H_{j}^{(L)} \right) \\
H^{(L)}_i - \frac{  \partial \mathcal{L}_{intra,L}  } {\partial H^{(L)}_i } 
&=\sum_{ \left(v_{i}, v_{j}\right) \in \mathcal{E} } \hat{A}_{i j} H_{j}^{(L)}
\end{aligned}
\end{equation}
\end{proof}

Lemma \autoref{lemma:gradient} implies  that if  we minimize $\mathcal{L}_{intra}$ iteratively, the outputs $H^{ (L) }$ in one sub-graph converge to the same point. Under the definition of $\hat{A}$, the edges across classes are filtered. 
Consequently, each sub-graph converges to a specific point, which is equivalent to applying infinite graph convolution on each sub-graph \cite{deeper_insight}.


The results of applying deep GNNs are not desirable, but only being able to choose the number of layers $L$ from a discrete set $\mathbb{N}_{+}$ is also not ideal. 
A balance between the size of the receptive field (the farthest nodes that a node can obtain information from) and the preservation of discriminating information is critical for generating useful node representation \cite{Preg-yang2020rethinking}. 
Thus, the regularization factor $\alpha_L \in \mathbb{R}$ in \autoref{eq:combined loss} is vital in enforcing a flexible strength of regularization on the GNN. 

What's more, using $\mathcal{L}_{intra,L}$ with $\hat{A}$ further alleviates the over-smoothing by constraining the message passed through edges. 
Consequently, GNNs with $\mathcal{L}_{intra,L}$ are able to explore a larger size of receptive field with a wide range value of $\alpha_L$.






\section{Experiment}
\label{sec:experiment}

\begin{table}[pth]
\centering
\caption{Dataset statistics.}
\begin{tabular}{lccccc}
\toprule
            & \multicolumn{1}{c}{ $|\mathcal{V}|$ } & \multicolumn{1}{c}{$|\mathcal{E}|$} & \multicolumn{1}{c}{F} & \multicolumn{1}{c}{C} & \multicolumn{1}{c}{Avg. Degree~} \\
          \midrule 
Cora~      & 2,708  & 5,278  & 1,433     & 7        & 1.95                    \\
Citeseer~  & 3,327  & 4,552  & 3,703     & 6        & 1.37                 \\
Pubmed~    & 19,717 & 44,324 & 500       & 3        & 2.25                \\
Computers~ & 13,752 & 245,861 & 767      & 10       & 17.88             \\
Photo~ & 7,560 & 119,081 & 745      & 8       & 15.57             \\
Wisconsin~ & 251 & 257 & 1703      & 5       & 1.03            \\

\bottomrule
\end{tabular}
\label{table:statistics}
\end{table}

We conduct experiments on six datasets \cite{yang2016revisiting, Pitfalls, Geom-GCN} with different scales of node classification benchmark to evaluate the performance of LEReg against state-of-art Graph Neural Network baselines. In addition to experiments on graph datasets of random splits, we also conduct experiments on the standard split of 3 graph datasets.

In this section, we aim to answer three questions: 
\textbf{Q1}: How LEReg helps to boost GNNs compared to other regularization terms? 
\textbf{Q2}: Can Energy Regularizations help to reach new state-of-the-art (SOTA) performances of node classification tasks?
\textbf{Q3}: How do the Inter-Energy Reg and Intra-Energy Reg help GNNs with node classification respectively? 
\textbf{Q4}: How Intra-Energy Reg and Intre-Energy Reg help the training of GNNs and what is the relation between Dirichlet Energy and accuracy.

\subsection{Experiment Setup}
\label{subsec:exp_setup}

\paragraph{Datasets} 
We focus on six widely used graph structured datasets:
(1) In Cora, Citeseer and Pubmed, nodes represent papers and edges between nodes represent the citation relationships. Given abstracts as bag-of-words node features, the task is to map papers to their respective research category. 
(2) In Computers and Photo, nodes represent goods and edges represent that two goods are frequently bought together. Given product reviews as bag-of-words node features, the task is to map goods to their respective product category.
(3)
In Wisconsin, nodes represent web pages and edges represent hyperlinks between them. Node features are the bag-of-words representation of web pages. The task is to classify the nodes into one of the five categories, student, project, course, staff, and faculty.
The statistic of the datasets is summarised as \autoref{table:statistics}.

\paragraph{Settings and implementation}
We evaluated the node classification accuracy for 3 backbone models on six popular datasets using random splits to compare with other graph regularization terms. The train/validation/test split of all the 6 datasets are 20 nodes/30 nodes/all the remaining nodes per class, as recommended by \cite{Pitfalls}.
We conducted each experiment on 5 random splits and 5 different trials for each random split.

We use the Adam Stochastic Gradient Descent (SGD) optimizer \cite{Adam_sgd} with a learning rate of 0.01 and early stopping with a patience of 100 epochs to train all the models. The implementation of LEReg is based on PyTorch \cite{paszke2019pytorch}. 
To enable more robust comparisons, we perform grid hyper-parameter search for each model with Neural Network Intelligence (NNI) \footnote{https://nni.readthedocs.io/en/stable/contents.html}, and report the test accuracy based on the best accuracy on the validation set. 
The search space for $\alpha_l$ and $\beta_l$ for $l=1, \cdots, L$ is 10 values evenly chosen from $[0,1]$ based on the validation accuracy. If not specified, LEReg is added on each layer of the backbone models.

For the standard split experiments using the state-of-the-art methods, the code we used is either the official code released by the authors or from the PyTorch-geometric \cite{torch_geometric} implementation.


\begin{table*}
\caption{
    Summary of node classification accuracy (\%) results on random splits. The highest results for each dataset with different backbones are bolded.
}
\label{table:random split}
\centering
    \begin{tabular}{lllllllll}
    \toprule
    &  & Cora & Citeseer & Pubmed & Computers & Photo & Wisconsin  \\ 
     \midrule \multirow{5}*{\textbf{GCN}}
     & Vanilla& 76.03$\pm$4.25 & 67.14$\pm$1.79 & 74.17$\pm$4.91 & 80.25$\pm$3.15 & 87.42$\pm$1.90 & 56.37$\pm$6.20 \\ 
     & Laplacian& 75.29$\pm$6.20 & 67.35$\pm$1.69 & 73.90$\pm$4.85 & 79.68$\pm$2.74 & 87.49$\pm$1.85 & 56.24$\pm$6.19 \\ 
     & Label Smoothing& 75.25$\pm$4.12 & 66.45$\pm$2.39 & 73.89$\pm$4.39 & 80.84$\pm$2.33 & 86.41$\pm$2.17 & 55.54$\pm$6.56 \\ 
     & P-reg& 76.96$\pm$3.57 & 67.23$\pm$1.53 & 74.37$\pm$4.80 & 80.61$\pm$3.01 & 88.35$\pm$1.82 & 56.49$\pm$6.34 \\ 
     & LEREG& \textbf{77.71}$\pm$2.92 & \textbf{68.26}$\pm$2.69 & \textbf{75.06}$\pm$4.53 & \textbf{81.82}$\pm$2.98 & \textbf{89.33}$\pm$1.74 & \textbf{58.17}$\pm$4.90 \\ 
     \midrule \multirow{5}*{\textbf{GAT}}
     & Vanilla& 77.64$\pm$1.57 & 64.93$\pm$1.47 & 74.24$\pm$2.54 & 80.17$\pm$2.66 & 86.29$\pm$2.15 & 53.77$\pm$7.30 \\ 
     & Laplacian& 77.52$\pm$1.76 & 65.09$\pm$1.66 & 74.82$\pm$1.87 & 80.25$\pm$2.72 & 86.31$\pm$2.43 & 53.65$\pm$7.59 \\ 
     & Label Smoothing& 76.99$\pm$1.80 & 64.84$\pm$1.20 & 73.30$\pm$3.78 & 77.94$\pm$2.94 & 87.37$\pm$1.88 & 53.62$\pm$8.18 \\ 
     & P-reg& 77.95$\pm$1.39 & 65.00$\pm$1.37 & 74.60$\pm$3.46 & 80.85$\pm$2.30 & \textbf{87.57}$\pm$1.90 & 54.08$\pm$7.03 \\ 
     & LEREG& \textbf{80.92}$\pm$0.86 & \textbf{66.06}$\pm$5.50 & \textbf{77.27}$\pm$2.44 & \textbf{81.31}$\pm$3.01 & 87.30$\pm$1.89 & \textbf{58.50}$\pm$5.64 \\ 
     \midrule \multirow{5}*{\textbf{SGC}}
     & Vanilla& 78.64$\pm$1.58 & 67.53$\pm$2.61 & 75.36$\pm$1.92 & 82.00$\pm$1.98 & 87.84$\pm$2.54 & 51.30$\pm$5.65 \\ 
     & Laplacian& 78.73$\pm$1.86 & 67.81$\pm$2.60 & 75.61$\pm$2.03 & 82.33$\pm$1.23 & 88.02$\pm$2.76 & 49.62$\pm$6.38 \\ 
     & Label Smoothing& 76.32$\pm$1.81 & 66.80$\pm$1.98 & 75.49$\pm$2.02 & 80.88$\pm$2.41 & 87.60$\pm$1.65 & 51.42$\pm$6.00 \\ 
     & P-reg& 78.55$\pm$1.10 & 67.95$\pm$1.58 & 75.55$\pm$2.15 & 82.20$\pm$1.31 & 88.37$\pm$2.04 & 51.02$\pm$5.97 \\ 
     & LEREG& \textbf{80.29}$\pm$2.62 & \textbf{68.77}$\pm$3.71 & \textbf{77.60}$\pm$1.31 & \textbf{83.14}$\pm$1.37 & \textbf{89.14}$\pm$2.23 & \textbf{52.31}$\pm$4.33 \\ 
        \bottomrule
    \end{tabular}
\end{table*}

\subsection{Results}

\paragraph{Boosting GNNs with LEReg}
We applied LEReg on GCN \cite{kipf-gcn}, GAT \cite{gat} and SGC \cite{sgc}, respectively.
GCN is a typical convolution-base model via an approximation of spectral graph convolutions. GAT introduces the attention mechanism into GNN and achieves promising results. 
These two models are representative of a broad range of GNNs. 
On the other hand, SGC reduces the excess complexity of GCNs through successively removing nonlinearities and collapsing weight matrices between consecutive layers. It provides a simplified graph convolution framework for large-scale graph datasets with high speed. 
In addition, we also applied global Laplacian regularization, 
label smoothing, 
and P-reg to the models, respectively, in order to give a comparative analysis on the effectiveness of LEReg. 
Laplacian regularization is a typical  regularization term in semi-supervised representation learning to provide graph structure information, defined as $\mathcal{L}_{\text {laplacian }}=\sum_{(i, j) \in \mathcal{E}}\left\| H_{ i }^{ (L)  }- H_{ j }^{ (L)  }. \right\|_{2}^{2}$.
Label smoothing \cite{szegedy2016rethinking,muller2019does} is a general method to improve the generalization capability of a model and has been adopted in many state-of-the-art deep learning models \cite{GPipe-nips-HuangCBFCCLNLWC19,real2019regularized}. It softens the one-hot hard targets $y_{c}=1, y_{i}=0 \, \forall i \neq c$ into $y_{i}^{L S} = (1-\alpha) y_{i}+ \alpha / C$, where $c$ is the correct label and $C$ is the number of classes. 
P-reg \cite{Preg-yang2020rethinking} is proposed as a variant of global graph Laplacian regularization to improve the GNNs with label information. It is defined as $\mathcal{L}_{P-r e g}=\frac{1}{N} \phi(Z, \tilde{A} Z)$, where $\phi$ is a function that measures the difference between $Z$ and $\tilde{A} Z$. 

The mean value and standard deviation are reported in \autoref{table:random split}. The regularization factor $\mu$ for global graph Laplacian regularization, label smoothing and P-reg is determined by grid search with NNI using the validation accuracy. The search space is $0.1, 0.2, \cdots, 1$. The $\mu$ is the same for each cell (model × dataset). 

\autoref{table:random split} shows three LEReg significantly improves three basic GNN models on node classification tasks. In general, graph Laplacian and label smoothing show limited improvements on GCN, GAT and SGC. For graph Laplacian regularization, it does not provide more information than graph convolution ( see \autoref{subsec: discusion Extra information from the supervision of LEReg} ), so the best result for it is to maintain the original accuracy of the backbone model. 
The improvements brought by LEReg are also consistent in all the cases except for GAT on the Photo dataset. Although P-reg improves GAT on Photo dataset, LEReg shows competitive results in other cases. This demonstrates  LEreg can be well applied to other GNN methods. Here we answer question \textbf{Q1}.

\paragraph{Comparing with SOTAs}
To answer question \textbf{Q2}, we compare GCN + LEReg, GAT + LEReg and SGC + LEReg with State-Of-The-Art methods. 
APPNP \cite{appnp}
is the newly proposed SOTAs. IncepGCN \cite{deeper_insight} train overcomes the limits of the GCN model with shallow architectures with co-training and self-training approaches.
GraphMix \cite{graphmix} adopts the idea of co-training \cite{blum1998combining} to use a parameters-shared fully-connected network to make a GNN more generalizable. It combines many other semi-supervised techniques such as Mixup \cite{zhang2017mixup}, entropy minimization with Sharpening \cite{EntropyMinimization}, Exponential Moving Average of predictions \cite{Weight-averaged} and so on. 
ResGCN \cite{resgcn} and JKNet \cite{jknet} are two strong benchmarks with residual connection and jumping layers respectively.
GroupNorm normalizes nodes within the same group independently to increase their smoothness and separates node distributions among different groups with differentiable group normalization (DGN).
DropEdge \cite{dropedge} randomly removes a certain number of edges from the input graph at each training epoch, acting as a data augmenter and also a message-passing reducer. By doing so, it alleviates the over-fitting and over-smoothing issues.
Note that DropEdge in \autoref{table:summary} is not a specific method, as DropEdge reports in \cite{dropedge}, we take the results of the best backbone for each dataset, that is, IncepGCN (with DropEdge) for Cora, IncepGCN (with DropEdge) for Citeseer, GCN (with DropEdge) for Pubmed. 

\autoref{table:summary} reports the test accuracy of node classification on three citation datasets with the public split.  
We report the mean of the accuracy of 10 different trials. The search space for $\alpha_l$ and $\beta_l$ for $l=1, \cdots, L$ is $0.1, \cdots, 1.0$. 
From \autoref{table:summary}, we have the following observations: (1)  GNNs with LEReg outperform the SOTAs on Cora and Citeseer. (2)
Although GNNs with LERegs do not match the SOTAs, they still show significant improvements compared to the original methods.





\begin{table}[pht]
\caption{
    Summary of node classification accuracy (\%) results on Cora, Citeseer, and Pubmed. The number in parentheses corresponds to the depths of the model. The highest result for each dataset is bolded.
}
\label{table:summary}
\centering
\begin{tabular}{l|lll}
\toprule
\textbf{Method}  & \textbf{ Cora}& \textbf{Citeseer} & \textbf{Pubmed}  \\ 
\midrule
IncepGCN  & 81.7   & 70.2     & 77.9      \\ 
APPNP     & 83.9    & 72.2    & 80.4     \\ 
GroupNorm & 81.1  & 69.5 & 79.5 \\ 
DropEdge  & 83.5 & 72.7 & 79.6 \\ 
ResGCN    & 78.8     & 70.5  & 78.6     \\ 
JKNet     & 81.1    & 69.8    & 78.1      \\ 
GraphMix & 83.5  & 73.8 & \textbf{80.8} \\
GCN       & 81.1 & 70.8 & 79.0  \\ 
SGC       & 81.0 & 71.9 & 78.9  \\ 
GAT  & 83.2 & 71.3 & 78.0 \\
\midrule
GCN  & 83.5    & 73.6  & 78.9      \\ 
E-SGC  & 83.5 & \textbf{74.2}  & 79.1 \\
E-GAT  & \textbf{84.1} & 72.2    & 79.2      \\ 

\bottomrule
\end{tabular}
\end{table}

\subsection{Ablation Study}

We conduct an ablation study to examine the contributions of different components in LEReg, to answer the research question \textbf{Q3}.
Due to the space limit, we only provide the results on Cora.
Note that this subsection mainly focuses on analyzing the effectiveness of the components of LEReg without the concern of pushing state-of-the-art results.
So, we do not perform delicate hyper-parameter selection with NNI like what \autoref{subsec:exp_setup} introduces. 
We employ GCN as the backbone. The hidden dimension, learning rate and weight decay are fixed to 64, 0.01 and 0.0005, receptively.
We compare four variants as follows:

\begin{itemize}
    \item GCN: without Inter-Energy Reg and Intra-Energy Reg
    \item GCN+InterE: without Intra-Energy Reg
    \item GCN+IntraE: without Inter-Energy Reg
    \item GCN*: with both Intra-Energy Reg and Inter-Energy Reg
\end{itemize}

In \autoref{fig:ablation}, we show the results of these four variants with depths of 4, 16, and 64. The height of the bars in \autoref{fig:ablation} shows the accuracy of node classification with Cora, from which we have three observations. 
First, with Inter-Energy Reg and Intra-Energy Reg, GCN* outperforms the other variants in most cases, suggesting that each of the designed regularization terms contributes to the success of LEReg. 
To be specific, LEReg enables GCN to be more powerful with shallow architectures and alleviate over-smoothing with deep layers.
Second, GCN with Inter-Energy Reg reaches better results than GCN with Intra-Energy Reg with layer of 64 in all three datasets, suggesting that Inter-Energy Reg is more powerful in alleviating, which is consistent with intuition and anticipation.
Third, without Inter-Energy Reg, GCN outperforms the other variants when we take layer as 16, which demonstrates the significance of the proposed Intra-Energy Reg for semi-supervised learning.

\pgfplotsset{
axis background/.style={fill=gallery},
grid=both,
  xtick pos=left,
  ytick pos=left,
  tick style={
    major grid style={style=white,line width=1pt},
    minor grid style=bgc,
    draw=none
    },
  minor tick num=1,
  ymajorgrids,
	major grid style={draw=white},
	y axis line style={opacity=0},
	tickwidth=0pt,
}

\begin{figure}[hp]
\centering
    \begin{tikzpicture}[scale=1]
	    \begin{axis}[
	    width=\linewidth ,
	    height=0.45\linewidth,
        ylabel={
        ACC-Cora
        },
        xticklabels={layer=4, layer=16, layer=32},
        xtick={1,2,3},
        x tick label style={opacity=0},
        tick label style={font=\small},
        enlarge x limits=0.3,
        ybar=2pt, 
        /pgf/bar width=10pt,
        ymajorgrids,
        tickwidth=0pt,
        yticklabel style={
        /pgf/number format/fixed,
        /pgf/number format/precision=5
        },
        scaled y ticks=false,
        every axis title/.append style={at={(0.1,0.8)},font=\bfseries},
        legend style = {
        font=\scriptsize, 
          draw=none, 
          draw opacity=0,
          at={(0.5, 0.98)},
          anchor=south,
          fill=none,
          column sep = 1pt, 
          /tikz/every even column/.append style={column sep=5mm},
          legend columns = -1},
        ]

		\addplot[draw=none, fill=bb] coordinates {
          (1, 80.4)
          (2, 64.9)
          (3, 28.7)
        }; \addlegendentry{GCN}
        
        \addplot[draw=none, fill=gg] coordinates
        {
          (1, 82)
          (2, 67.4)
          (3, 39.6)
        }; \addlegendentry{GCN+InterE}
        
        \addplot[draw=none, fill=yy] coordinates
        {
          (1, 82.6)
          (2, 70.3)
          (3, 37.6) 
        }; \addlegendentry{GCN+IntraE}
        
        \addplot[draw=none, fill=rr] coordinates
        {
          (1, 82.9)
          (2, 70.5)
          (3, 45.2)
        }; \addlegendentry{GCN* }
        \end{axis}
    \end{tikzpicture}
    
    \begin{tikzpicture}[scale=1]
	    \begin{axis}[
	    width=\linewidth ,
	    height=0.45\linewidth,
        ylabel={
        ACC-Citeseer
        },
        xticklabels={layer=4, layer=16, layer=32},
        xtick={1,2,3},
        x tick label style={opacity=0},
        tick label style={font=\small},
        enlarge x limits=0.3,
        ybar=2pt,
        /pgf/bar width=10pt,
        ymajorgrids,
        tickwidth=0pt,
        yticklabel style={
        /pgf/number format/fixed,
        /pgf/number format/precision=5
        },
        scaled y ticks=false,
        every axis title/.append style={at={(0.1,0.8)},font=\bfseries},
        legend style = {
		  font=\small,
          draw=none, 
          draw opacity=0,
          at={(0.5, 0.98)},
          anchor=south,
          fill=none,
          column sep = 1pt, 
          /tikz/every even column/.append style={column sep=5mm},
          legend columns = -1},
	    ]
	    
		\addplot[draw=none, fill=bb] coordinates {
          (1, 67.6)
          (2, 18.3)
          (3, 20)
        }; 
        
        \addplot[draw=none, fill=gg] coordinates
        {
          (1, 68.3)
          (2, 53.9)
          (3, 30.9)
        }; 
        \addplot[draw=none, fill=yy] coordinates
        {
          (1, 67.9)
          (2, 37.7)
          (3, 29.3)
        }; 
        
        \addplot[draw=none, fill=rr] coordinates
        {
          (1, 69.1)
          (2, 48.5)
          (3, 30.6)
        }; 
        \end{axis}
    \end{tikzpicture}

    \begin{tikzpicture}[scale=1]
	    \begin{axis}[
	    width=\linewidth ,
	    height=0.45\linewidth,
        ylabel={
        ACC-Pubmed
        },
        xticklabels={layer=4, layer=16, layer=64},
        xtick={1,2,3},
        tick label style={font=\small},
        enlarge x limits=0.3,
        ybar=2pt,
        /pgf/bar width=10pt,
        ymajorgrids,
        tickwidth=0pt,
        yticklabel style={
        /pgf/number format/fixed,
        /pgf/number format/precision=5
        },
        scaled y ticks=false,
        every axis title/.append style={at={(0.1,0.8)},font=\bfseries},
        legend style = {
		  font=\small,
          draw=none, 
          draw opacity=0,
          at={(0.5, 0.98)},
          anchor=south,
          fill=none,
          column sep = 1pt, 
          /tikz/every even column/.append style={column sep=5mm},
          legend columns = -1},
	    ]
	    
		\addplot[draw=none, fill=bb] coordinates {
          (1, 76.5)
          (2, 40.9)
          (3, 35.3)
        }; 
        
        \addplot[draw=none, fill=gg] coordinates
        {
          (1, 78.9)
          (2, 66.5)
          (3, 53.5)
        }; 

        \addplot[draw=none, fill=yy] coordinates
        {
          (1, 77.9)
          (2, 73)
          (3, 45.3)
        }; 
        
        \addplot[draw=none, fill=rr] coordinates
        {
          (1, 78)
          (2, 69)
          (3, 50.9)
        }; 
        \end{axis}
    \end{tikzpicture}

    \caption{
    Ablation Study over three  citation datasets. Each bar represents an algorithm under a certain number of layers, plotted by number of layers on the horizontal axis and the accuracy on the vertical.
    }
    \label{fig:ablation}
\end{figure}
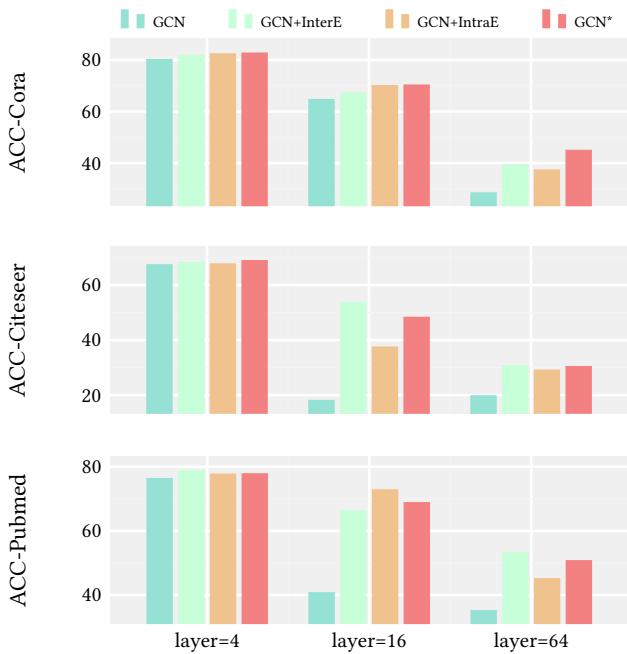

\begin{figure*}
    \centering
    \subfigure[The Intra Energy of node representations on Cora.]
    {
        \includegraphics[width=3in]{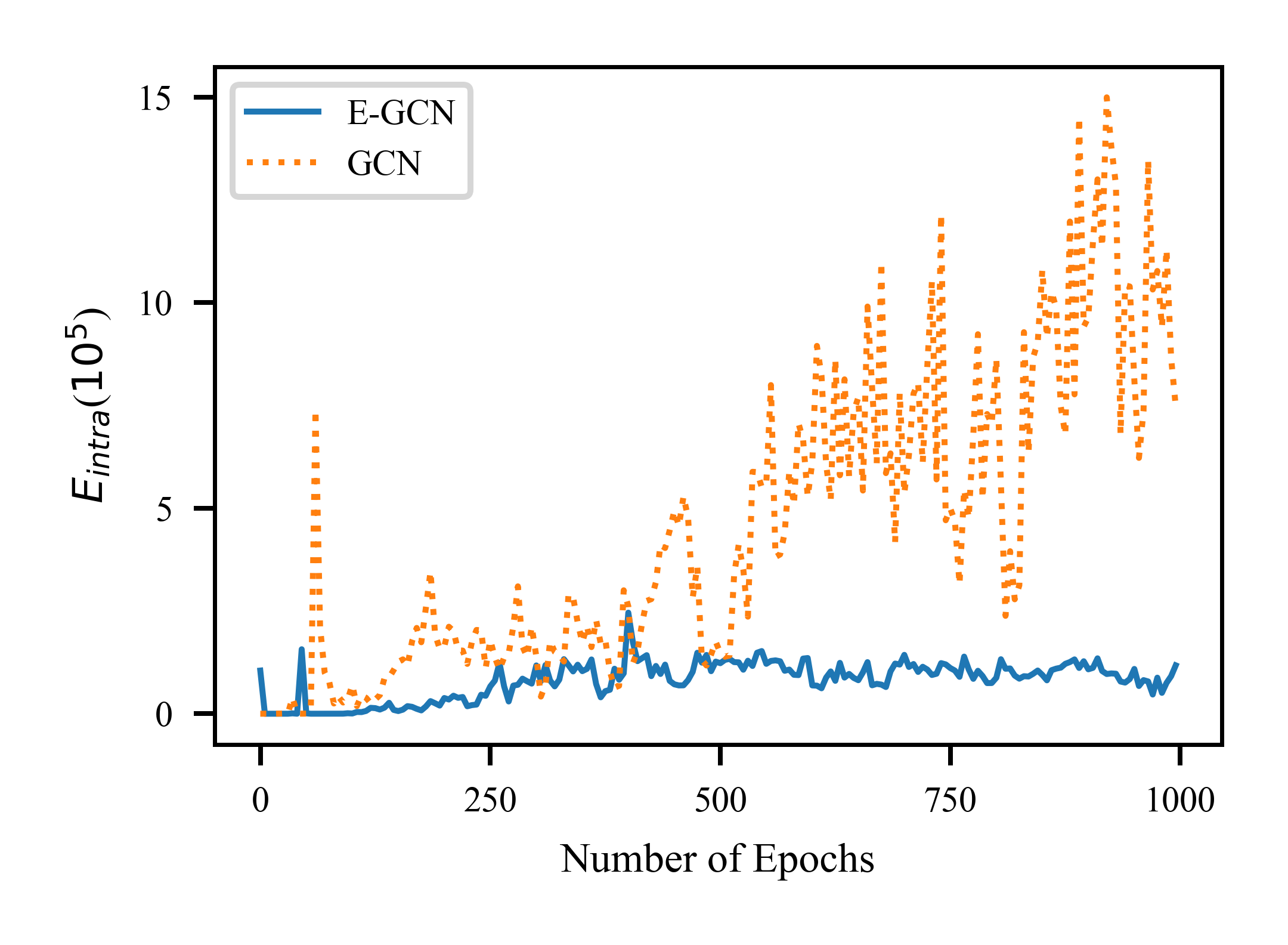}
    }
    \subfigure[The accuracy of node classification on Cora.]
    {
        \includegraphics[width=3in]{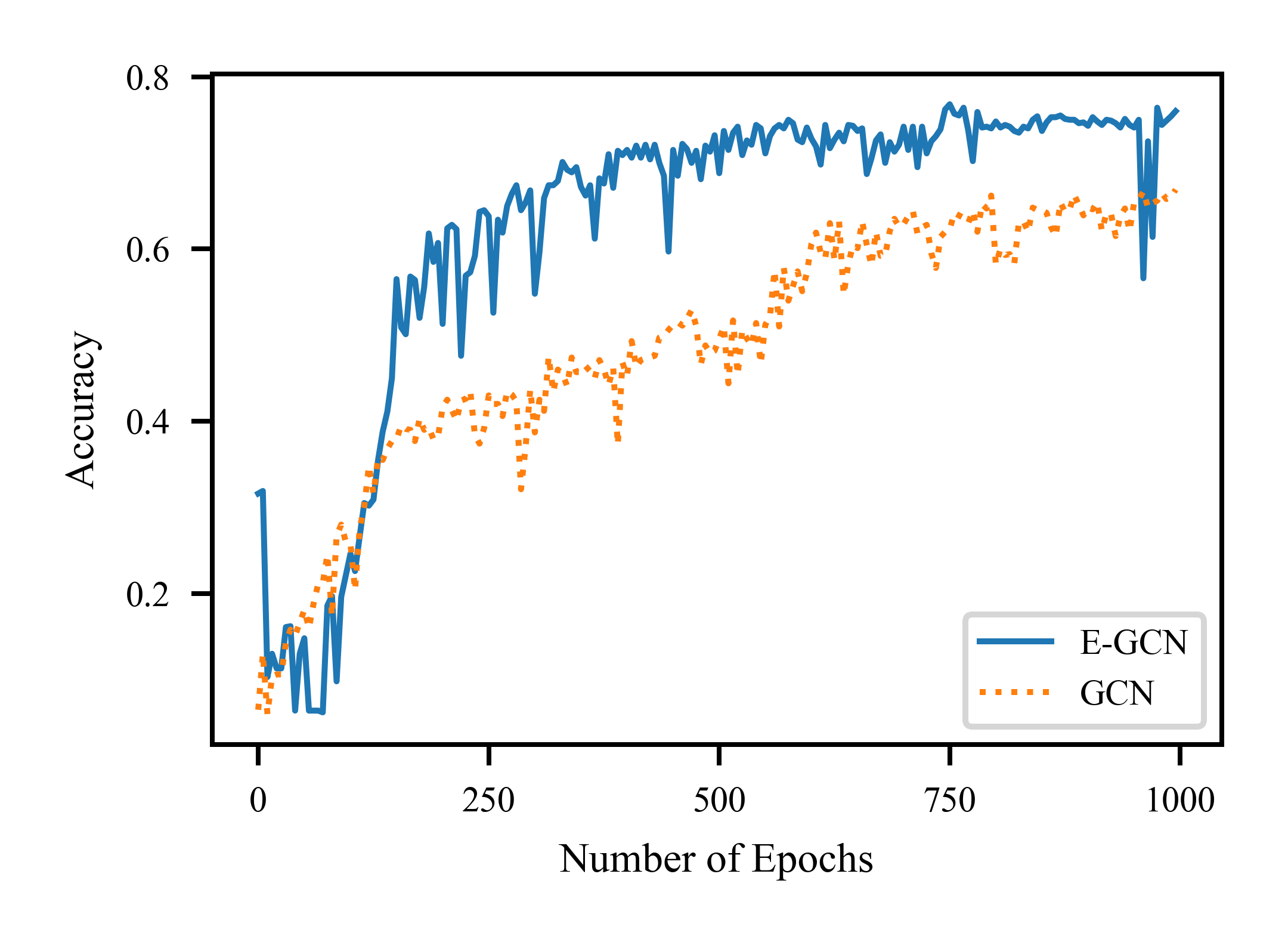}
    }

    \caption{
    The trends of accuracy and Intra-Energy during training. (a) The Intra-Energy of the embeddings of GCN and E-GCN. (b) The accuracy of node classification task on Cora with GCN and E-GCN (GCN with LEReg).
    }
    \label{fig:VS}
\end{figure*}


\begin{figure}
    \centering
    \includegraphics[width=3in]{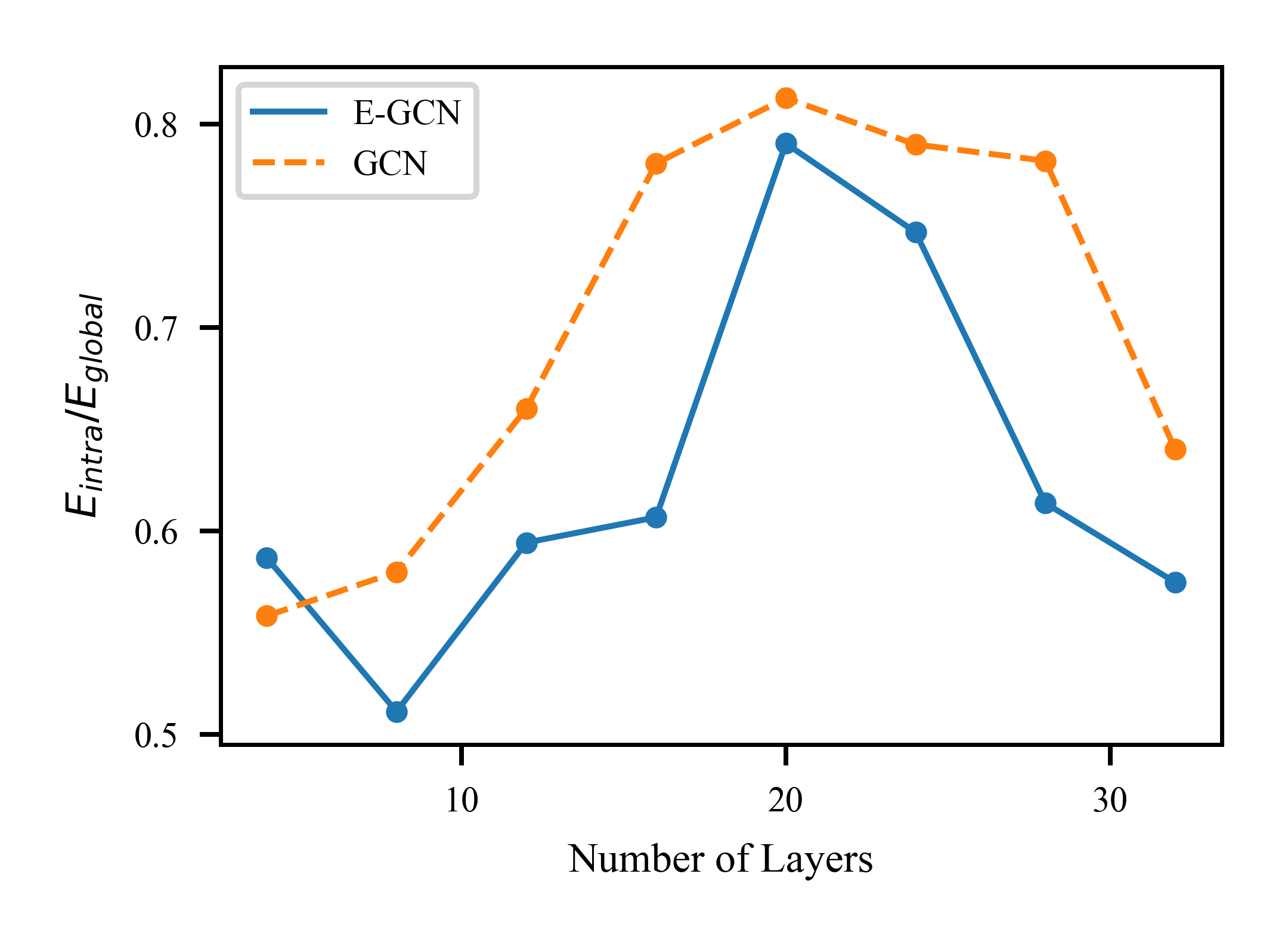}
    \caption{
   The ratio of Intra-Energy to Global Energy for different layers.
    }
    \label{fig:Energy-Layer}
\end{figure}

\subsection{Discussion}
To answer \textbf{Q4}, we perform experiments to further investigate the effectiveness of our loss construction during the training process. 
Limited by space, here we show the results of GCN on Cora. We mainly aim to show the training process rather than reach new SOTAs, so we do not adjust backbone models and hyperparameters. We set the layer to be 16, for the 16-layer GCN suffers from over-smoothing but its accuracy is not too low to show the effectiveness of models. E-GCN denotes our model based on GCN with Inter-Energy Reg and Intra-Energy Reg. We train both models for 1000 epochs, when GCN nearly reaches the early stop point while E-GCN already passed. 

In \autoref{fig:VS}, we show the training curve of 16-layer GCN and E-GCN. 
(a) directly presents the effect of $\mathcal{L}_{intra}$. It shows that adding our regularization to training can keep the local energy at a low rate.

Relating (a) with (b), we can observe that by keeping Intra-Energy low, our model can capture more information from similar nodes and reach high accuracy. 
Moreover, the accuracy curve rises more rapidly than that in GCN. This shows that our regularization can also speed up the training process in the starting few epochs.
As discussed before, our LEReg effectively lifts accuracy of GCN with deep layers. We further explore the relation between accuracy and our LEReg on different layers.
We experiment on GCN with different layers to show the performance of LEReg. We use $E_{intra}/E_G$ to approximate our regularization, where $E_G$ is the global Dirichlet Energy without masking edges. Analyses in \autoref{sec:model} show that we minimize $E_{intra}$ and maximize $E_{inter}$ to optimize the model. And $E_{inter}+E_{intra}\approx E_G$. Therefore, the optimization minimize $E_{intra}/E_G$. Smaller $E_{intra}/E_G$ means lower rate of smoothness.

\autoref{fig:Energy-Layer} shows that for deep layers GCN, LEReg effectively reduce $E_{intra}/E_G$. For GCN of the same layers, our regularization reduces over-smoothing which leads to higher accuracy.

\section{Related Work}

\subsection{Graph Neural Networks}

Motivated by the success of CNNs in computer vision, Bruna et al. \cite{bruna-gcn} and Defferrard et al. \cite{defferrard2016convolutional} develop graph convolution based on spectral graph theory. Afterwards, Kipf et al.  \cite{kipf-gcn} stack layers of first-order Chebyshev polynomial filters \cite{defferrard2016convolutional} with a redefined propagation matrix.
Besides, sampling-based methods have also been developed for fast and scalable GNN training, such as GraphSAGE \cite{sage}, FastGCN \cite{fastGCN}, and AS-GCN \cite{asgcn}. GraphSAGE \cite{sage} was proposed by Hamilton, based on sampling and aggregation for large graphs. 
GAT \cite{gat} utilizes attention mechanism to weigh the importance of neighbors.  MixHop \cite{mixhop}, CSGNN \cite{csgnn}, and ADSF \cite{adsf} share similar thinkings of attention mechanism of adjusting the weights for aggregating.

\subsection{Graph Regularization}
Many effective methods \cite{zhu2003semi,zhou2004learning,ando2007learning} have been proposed for node classification by adding Laplacian regularization to a feature mapping model $f(X)$ to encourage smoothing between connected nodes, where $X \in \mathbb{R}^{N \times F}$ is the node feature matrix. 
These methods only model the features of each node and does not encode the graph structure, and they base on the assumption that connected nodes is likely to be of the same labels.

Based on Laplacian regularization, P-reg\cite{Preg-yang2020rethinking} utilizes label propagation to infuse extra information and changes the edge-centric Laplacian regularization to node-centric. PairNorm\cite{pairnorm} further encourage the similarity between connected nodes and add a negative term to distances between disconnected pairs. To modify the large-scale smoothing caused by the assumption in Laplacian regularization, MADReg \cite{madreg} proposes a smoothing regularization using step size limits to make the graph nodes receive more useful information and less interference noise. 
Combined with data augmentations, BVAT\cite{BVAT} and GRAND \cite{grand} propose to use consistency loss to encourage similarity between different augmentations. GraphMix \cite{graphmix} import MixUp\cite{zhang2017mixup} method to augment graph data. They promote the model to predict the same corresponding labels.

\subsection{Over-smoothing in Deep GNNs}
Dropping nodes or edges are common methods to prevent over-smoothing.
DropEdge \cite{dropedge} points out that for the planar graph and its dual graph, edge deletion in one graph corresponds to the contraction in the other graph and vice versa. GRAND \cite{grand} adopts the way of random dropping nodes to retard the convergence speed of over-smoothing. 
Considering the relationship between GCN and PageRank, Johannes et al. propose APPNP \cite{appnp}. 
Qimai et al. \cite{deeper_insight} apply co-training and self-training to overcome over-smoothing.
DAGNN \cite{dagnn} and SGC \cite{sgc} ascribe over-smoothing to the complexity of deep GCNs, which decouples the representation transformation and propagation to simplify the learning process.
Methods that add skip connections \cite{resgcn,jknet,StrongerGCN,gcnii} aggregate the initial layer or intermediate layers in each hidden layer. 
An extension of vanilla GCN with two techniques: initial residual and identity mapping is proposed in GCNII \cite{gcnii}.
Similarly, MADReg \cite{madreg} optimizes the graph topology based on the prediction result.
Different normalization methods were also proposed to prevent over-smoothing \cite{pairnorm,groupnorm}.

\section{Conclusion}
\label{sec:conclusion}

In this paper, we focus on the issue of adaptive information passing in Graph Neural Networks by regularizing the smoothness of different parts of graphs.
We first observe the smoothness of various classes in the commonly used dataset, Cora, and find that the smoothness among local graphs varies as the structure of the corresponding graphs differs from each other. 
Given the theoretical analysis and empirical study, we prove that there exist differences between local graphs and propose two regularization terms: Inter-Energy Reg and Intra-Energy Reg, based on the measurement of local smoothness. Furthermore, we state that LEReg spreads label information, thus provide more information than global regularization terms. We show that the over-smoothing problem in deep GCNs can also be alleviated with our method. Elaborate experiments show that with our regularization, GNNs achieve better results compared to SOTAs on node classification tasks.

\begin{acks}
This work was supported by the National Natural Science Foundation of China (Grant No.61876006).
\end{acks}

\bibliographystyle{ACM-Reference-Format}
\bibliography{LEReg}

\end{document}